\documentclass{article}

\usepackage{microtype}
\usepackage{graphicx}
\usepackage{subcaption}
\usepackage{booktabs} 
\usepackage{enumitem}

\usepackage{hyperref}

\usepackage[preprint]{icml2026}

\usepackage{amsmath}
\usepackage{amssymb}
\usepackage{mathtools}
\usepackage{amsthm}
\usepackage{amsmath}

\usepackage[capitalize,noabbrev]{cleveref}

\theoremstyle{plain}
\newtheorem{theorem}{Theorem}[section]

\theoremstyle{definition}
\newtheorem{definition}[theorem]{Definition}

\theoremstyle{remark}

\newtheorem{claim}{Claim}

\DeclareMathOperator*{\argmin}{argmin}

\usepackage[textsize=tiny]{todonotes}

\icmltitlerunning{Pseudo-Invertible Neural Networks}

\begin{document}

\twocolumn[
  \icmltitle{Pseudo-Invertible Neural Networks}



  \icmlsetsymbol{equal}{*}

  \begin{icmlauthorlist}
    \icmlauthor{Yamit Ehrlich}{tec}
    \icmlauthor{Nimrod Berman}{bgu}
    \icmlauthor{Assaf Shocher}{tec}
  \end{icmlauthorlist}

  \icmlaffiliation{tec}{Technion}
  \icmlaffiliation{bgu}{Ben-Gurion University}


  \icmlcorrespondingauthor{Yamit Ehrlich}{yamitehrlich@campus.technion.ac.il }

  \icmlkeywords{Machine Learning, ICML}

  \vskip 0.3in
]



\printAffiliationsAndNotice{}  

\begin{abstract}
The Moore-Penrose Pseudo-inverse (PInv) serves as the fundamental solution for linear systems. In this paper, we propose a natural generalization of PInv to the nonlinear regime in general and to neural networks in particular. We introduce \textbf{Surjective Pseudo-invertible Neural Networks (SPNN)}, a class of architectures explicitly designed to admit a tractable non-linear PInv. The proposed non-linear PInv and its implementation in SPNN satisfy fundamental geometric properties. One such property is null-space projection or "Back-Projection", $x' = x + A^\dagger(y-Ax)$, which moves a sample $x$ to its closest consistent state $x'$ satisfying $Ax=y$. We formalize \textbf{Non-Linear Back-Projection (NLBP)}, a method that guarantees the same consistency constraint for non-linear mappings $f(x)=y$ via our defined PInv. We leverage SPNNs to expand the scope of zero-shot inverse problems. Diffusion-based null-space projection has revolutionized zero-shot solving for linear inverse problems by exploiting closed-form back-projection. We extend this method to non-linear degradations. Here, "degradation" is broadly generalized to include any non-linear loss of information, spanning from optical distortions to semantic abstractions like classification. This approach enables zero-shot inversion of complex degradations and allows precise semantic control over generative outputs without retraining the diffusion prior. 
\end{abstract}

\vspace{-0.1cm}\section{Introduction}

The concept of a generalized inverse is foundational to signal processing and data analysis. In the linear regime, the Moore-Penrose pseudo-inverse (PInv) serves as the canonical solution, providing a rigorous framework for inverting singular operators. While typically computed via SVD, it is formally defined by the four Penrose Identities~\cite{penrose1955generalized}. Uniquely, the PInv guarantees the \textbf{minimal-norm solution} among all valid inverses, while simultaneously providing the \textbf{optimal least-squares (MMSE) approximation} when no exact solution exists. Furthermore, it enables exact consistency enforcement via null-space back-projection, allowing one to project any vector onto its closest consistent state. These properties make the PInv indispensable for diverse applications.

In contrast, "inversion" in modern Deep Learning is typically approached either as a regression task (autoencoders) or a probabilistic generation task (conditional diffusion). While empirically effective, these approaches lack the structural guarantees of their linear counterparts, failing to satisfy reflexive consistency ($f(f^{-1}(f(x))) \neq f(x)$). Conversely, architectures that do enforce rigorous invertibility, such as Invertible Neural Networks (INNs) and Normalizing Flows, are constrained by strict bijectivity. To ensure tractability, they must preserve dimensionality, rendering them unsuitable for tasks where information is discarded, such as classification and many others.

In this work, we propose a natural generalization of the PInv to the non-linear regime. We define this operator by strictly maintaining the first two Penrose identities ($gg^\dagger g = g$ and $g^\dagger g g^\dagger = g^\dagger$), ensuring reflexive consistency between the forward and inverse mappings. This aligns with \cite{gofer2023generalized}. The final two Penrose identities, however,  are inapplicable in the non-linear setting as they rely on the linear adjoint. This leaves a challenge of choosing the unique PInv out of the set of generalized inverses. Differently from Gofer \& Gilboa, who define the PInv as the one that minimizes the norm $||g^\dagger(y)||$, we propose a different definition: we define the PInv as the unique solution induced by a Bijective Completion of the operator. While both approaches are equivalent in the linear regime, they diverge in the non-linear case. We show in §~\ref{sec:nonlinear_pinv} that this defintion makes the most natural choice for PInv.

We introduce \textbf{Surjective Pseudo-invertible Neural Networks (SPNN)}, a class of architectures designed to model surjective mappings where the input dimensionality exceeds the output, resulting in explicit information loss. Unlike standard networks, SPNNs possess a \textbf{built-in Pseudo-Inverse} by construction. This is achieved via a surjective coupling mechanism, where the residual degrees of freedom are resolved by a learned auxiliary network. This auxiliary component is optimized to resolve the ambiguity by selecting the unique solution that minimizes the induced norm, effectively implementing our proposed non-linear PInv.

This algebraic framework unlocks powerful applications for zero-shot restoration and analysis. We leverage SPNNs to introduce Non-Linear Back-Projection (NLBP), which projects inputs onto the pre-image of non-linear surjective mappings. As a demonstration of the framework's capabilities, we show how NLBP enables precise semantic control over generative outputs without retraining.

Our main contributions are summarized as follows:
\begin{itemize}[leftmargin=*]
    \item \textbf{Non-linear Pseudo-inverse:} We propose a natural definition for the non-linear pseudo-inverse justified by coordinate consistency and consistent properties.
    
    \item \textbf{SPNN:} We introduce the first deep architecture to admit a tractable, unique PInv that satisfies this definition and the Penrose identities by construction.
    
    \item \textbf{Non-Linear Back-Projection (NLBP):} We generalize Iterative Back-Projection to the non-linear regime, enabling zero-shot solution of complex inverse problems.
\end{itemize}

\vspace{-0.1cm}\section{Related Work}

\vspace{-0.1cm}\paragraph{Deep Invertible Architectures.}
The structural foundation of modern invertible networks traces back to the 1970s in cryptography with the Feistel Cipher~\cite{feistel1973cryptography}, which introduced the concept of splitting data into halves and modifying one half conditioned on the other to ensure exact reversibility. In deep learning, this structure was revitalized by NICE (Non-linear Independent Components Estimation)~\cite{dinh2014nice}, which proposed additive coupling layers to enforce a tractable Jacobian determinant of unity. This was subsequently improved by RealNVP~\cite{dinh2017density}, which utilized affine coupling layers (incorporating scaling) to enhance expressivity, and Glow~\cite{kingma2018glow}, which introduced invertible 1x1 convolutions to replace fixed channel permutations with learnable unitary mixing, often parameterized via Cayley transforms~\cite{trockman2021orthogonalizing}.

\vspace{-0.1cm}\paragraph{Surjective Architectures.}
While standard flows are restricted to bijectivity, SurVAE Flows~\cite{nielsen2020survae} introduced a framework of surjective layers (e.g., pooling, slicing) to bridge the gap between VAEs and Flows. Our SPNN architecture shares the structural use of surjective coupling blocks where input dimensions are split and partially discarded. However, we diverge fundamentally in objective: whereas SurVAE optimizes a stochastic lower bound on the likelihood (ELBO)~\cite{nielsen2020survae}, SPNNs utilize a deterministic auxiliary network to explicitly model the unique PInv. This transforms the architecture from a probabilistic generative model into an algebraic operator that learns the unique \textit{non-linear PInv}.

\vspace{-0.1cm}\paragraph{Generalized Inversion of Non-Linear Operators.}
The theory of generalized inverses is well-established for linear operators, with the Moore-Penrose inverse serving as the canonical solution satisfying the algebraic Penrose identities. Recently, Gofer and Gilboa~\cite{gofer2023generalized} established a theoretical framework for non-linear pseudo-inversion, proposing that the unique ``pseudo-inverse'' in metric spaces be defined by the \textit{Best Approximate Solution (BAS)} property, replacing the linear adjoint condition with a minimal-norm constraint on the pre-image. 

\vspace{-0.1cm}\paragraph{Zero-Shot Inverse Problems.}
Pretrained diffusion models have emerged as state-of-the-art priors for solving inverse problems $y = \mathcal{D}(x)$ zero-shot. Current methods generally follow two paradigms:\\
\textit{Null-Space Methods (Linear):} Approaches like DDRM~\cite{kawar2022denoising}, DDNM~\cite{wang2023zero}, and SNIPS~\cite{kawar2021snips} exploit the SVD of the degradation matrix to project diffusion samples onto the measurement subspace ($x' = x + A^\dagger(y - Ax)$). These provide stable, closed-form consistency but are strictly limited to linear degradations.\\
\textit{Gradient Guidance (Non-Linear):} Methods like DPS (Diffusion Posterior Sampling)~\cite{chung2023diffusion} and PnP-Diffusion~\cite{chung2022improving} handle non-linear operators by backpropagating the error $\nabla_x \|y - \mathcal{D}(x)\|^2$ through the network. This requires differentiable forward models and often suffers from gradient instability during the noisy stages of generation. \\Our {Non-Linear Back-Projection (NLBP)} unifies these approaches by generalizing the classic \textit{Iterative Back-Projection (IBP)} algorithm of Irani and Peleg~\cite{irani1991improving} to the deep learning era. By replacing the heuristic kernel of IBP with a learned, structural PInv $\mathcal{D}^\dagger$, we achieve the stability of null-space projection for arbitrary non-linear degradations.

\vspace{-0.1cm}\section{Preliminaries}
\label{sec:preliminaries}
\vspace{-0.1cm}\subsection{Basic Notions of Invertibility}
\label{subsec:basic_inversion}

The invertibility of an operator $g: \mathcal{X} \to \mathcal{Y}$ is strictly determined by its mapping characteristics. 
If $g$ is \textbf{injective} (one-to-one), it preserves distinctness, guaranteeing that no information is lost during the mapping. This allows for a \textbf{Left Inverse} $g_L^{-1}: \mathcal{Y} \to \mathcal{X}$ satisfying:
\vspace{-0.1cm}\begin{equation}
    g_L^{-1} \circ g = \text{Id}_{\mathcal{X}},
\vspace{-0.1cm}\end{equation}
which perfectly recovers the input $x$.

Conversely, if $g$ is \textbf{surjective} (onto), its range covers the entire target space $\mathcal{Y}$. This guarantees that every target $y$ is reachable, allowing for a \textbf{Right Inverse} $g_R^{-1}: \mathcal{Y} \to \mathcal{X}$ satisfying:
\vspace{-0.1cm}\begin{equation}
    g \circ g_R^{-1} = \text{Id}_{\mathcal{Y}}.
\vspace{-0.1cm}\end{equation}
When an operator is both injective and surjective (bijective), it admits a unique two-sided \textbf{Inverse} $g^{-1}$.

\vspace{-0.1cm}\subsection{The Linear Moore-Penrose Standard}
For a linear operator $A \in \mathbb{C}^{m \times n}$, the Moore-Penrose PInv $A^\dagger \in \mathbb{C}^{n \times m}$ is the unique matrix satisfying the four Penrose identities:
\begin{align}
    A A^\dagger A &= A, \label{eq:mp1} \\ 
    A^\dagger A A^\dagger &= A^\dagger, \label{eq:mp2} \\
    (A A^\dagger)^* &= A A^\dagger, \label{eq:mp3} \\
    (A^\dagger A)^* &= A^\dagger A. \label{eq:mp4}
\end{align}
Equations \eqref{eq:mp1} and \eqref{eq:mp2} establish \textit{reflexive consistency}, ensuring that $A^\dagger$ acts as a generalized inverse. Equations \eqref{eq:mp3} and \eqref{eq:mp4} enforce that the projection operators $AA^\dagger$ and $A^\dagger A$ are orthogonal (Hermitian), which is crucial for the least-squares and minimal-norm properties of the solution. Specifically, for any $y$, $x^* = A^\dagger y$ is the unique vector that minimizes $\|x\|_2$ among all minimizers of $\|Ax - y\|_2$.

\vspace{-0.1cm}\subsection{Back-Projection}
A fundamental property derived from these identities is the capacity to project an arbitrary vector onto the solution space. Given an initial estimate $x$ and a measurement $y$, the update:
\vspace{-0.1cm}\begin{equation}
    x' = x + A^\dagger(y - Ax)
    \label{eq:linear_projection}
\vspace{-0.1cm}\end{equation}
yields the unique vector $x'$ that is closest to $x$ (in the Euclidean norm $\|x' - x\|_2$) subject to the consistency constraint $Ax' = y$ (assuming $y$ lies in the range of $A$). This property is the cornerstone of null-space methods for inverse problems, allowing one to enforce measurement fidelity without destroying the prior information contained in $x$.

\vspace{-0.1cm}\subsection{Generalized Non-Linear Inversion}
Extending this concept to a non-linear operator $g: \mathcal{X} \to \mathcal{Y}$ is non-trivial, as the adjoint operator (required for Eqs. \eqref{eq:mp3}-\eqref{eq:mp4}) is not generally defined. However, the first two Penrose identities ($gg^\dagger g = g$ and $g^\dagger g g^\dagger = g^\dagger$) rely solely on composition and can be applied to any mapping. 

Satisfying these two identities guarantees \textit{Reflexive Consistency}, but it does not yield a unique solution. To resolve this ambiguity, \cite{gofer2023generalized} proposed adopting the variational principle of the linear PInv. They define the unique non-linear pseudo-inverse as the specific right-inverse that recovers the \textbf{minimal-norm} element of the pre-image ($\argmin_{x \in g^{-1}(y)} \|x\|$). This effectively generalizes the linear "closest solution to zero" property to the non-linear regime.

\vspace{-0.1cm}\subsection{Iterative Back-Projection (IBP)}
While Eq. \eqref{eq:linear_projection} provides a one-step projection for ideal linear cases, many inverse problems such as Super-Resolution (SR) require iterative refinement to handle ill-posedness and accumulate details. Irani and Peleg~\cite{irani1991improving} formalized this with the Iterative Back-Projection (IBP) algorithm.

Designed originally for SR, IBP minimizes the reconstruction error by iteratively adding the "back-projected" error to the estimate:
\vspace{-0.1cm}\begin{equation}
    x_{k+1} = x_k + \lambda H (y - A x_k),
    \label{eq:ibp}
\vspace{-0.1cm}\end{equation}
where $H$ is a back-projection kernel (approximating $A^\dagger$) and $\lambda$ is a step size. This process essentially applies the logic of Eq. \eqref{eq:linear_projection} repeatedly, steering the candidate image $x_k$ towards measurement consistency while relying on the starting point $x_0$ as a prior.

\vspace{-0.1cm}\section{Non-Linear Pseudo-Inverse}
\label{sec:nonlinear_pinv}
In the linear regime, the Moore-Penrose Pseudo-Inverse is canonically defined as the unique operator satisfying the four Penrose identities. Extending this to the non-linear setting presents a challenge: the final two Penrose identities rely on the linear adjoint and are thus inapplicable. To resolve this ambiguity, we propose a definition that is the natural generalization. We base it on bijective completion and provide justifications for the naturality of this definition.

\begin{definition}[Bijective Completion]
Let $g: \mathcal{X} \to \mathcal{Y}$ be a surjective continuous operator. A \textbf{Bijective Completion} is a diffeomorphism $G: \mathcal{X} \to \mathcal{Y} \times \mathcal{Z}$ defined as:
\vspace{-0.1cm}\begin{equation}
G(x) = 
\begin{bmatrix} g(x) | q(x) \end{bmatrix}^T
\vspace{-0.1cm}\end{equation}where $q: \mathcal{X} \to \mathcal{Z}$ is surjective
\end{definition}

\begin{definition}[The Natural Non-Linear PInv]
Given a surjective operator $g$ and a fixed Bijective Completion $G$, the \textbf{Natural Non-Linear Pseudo-Inverse induced by G}, denoted $g^\dagger$, is the operator that selects the unique solution in the pre-image minimizing the distance to the origin in the completed space:
\vspace{-0.1cm}\begin{equation}
\label{eq:nonlin_pinv}
    g^\dagger(y) = \argmin_{x \in g^{-1}(y)} \| G(x) - G(0) \|_2.
\vspace{-0.1cm}\end{equation}
\end{definition}

\vspace{-0.1cm}\subsection{Theoretical Justification}
We justify this definition by demonstrating that it preserves three fundamental properties that characterize the linear pseudo-inverse.

\textbf{1. Reflexive Consistency.}
Our definition satisfies the reflexive Penrose identities ($g g^\dagger g = g$ and $g^\dagger g g^\dagger = g^\dagger$) by construction. The first identity follows directly from the pre-image constraint ($g(g^\dagger(y))=y$). The second follows from the uniqueness of the minimizer: since $g^\dagger$ maps to a unique section of the manifold (where $\|G(x)-G(0)\|$ is minimized), applying $g$ and then $g^\dagger$ consistently returns to this same section. This part also aligns with \cite{gofer2023generalized}.

\textbf{2. Coordinate Consistency.}
If $g$'s non-linearity is purely coordinate transformation, the correct inverse should respect the underlying linear structure.
Consider $g(x)$$=$$A\phi(x)$ where $\phi$ is a diffeomorphism with $\phi(0)$$=$$0$ and $A$ is linear surjective.
Since $A$ is linear in the $\phi$-coordinates, the Moore-Penrose inverse $A^\dagger$ gives the canonical solution there, yielding $g^\dagger(y)$$=$$\phi^{-1}(A^\dagger y)$.
Minimizing $\|x\|$ over the pre-image (as in Gofer \& Gilboa) ignores this geometric structure and produces an incorrect solution.
Our definition recovers the correct inverse: the natural completion is $G(x)$$=$$\phi(x)$ (including null-space components), so minimizing $\|G(x)-G(0)\|$$=$$ \|\phi(x)\|$ yields $\phi^{-1}(A^\dagger y)$ as required.


\textbf{3. Back-Projection Consistency.}
Eq.~\ref{eq:linear_projection} provides a closed form for finding the closest vector that satisfies a linear constraint. A non-linear PInv is expected to be used in an equivalent non-linear closed form. We propose an equivalent Non-Linear Back-Projection (NLBP). We demonstrate that this formulation is robust: it guarantees convergence to the valid pre-image for \textit{any} reflexive generalized inverse. However, we further show that for this update to act as an \textit{orthogonal projection}, specifically finding the closest solution to the initial estimate, the pseudo-inverse must strictly adhere to our proposed definition in Eq.~\ref{eq:nonlin_pinv}.

\begin{definition}[Non-Linear Back-Projection]
Let $g$ be a surjective continuous function and $G$ a bijective completion of it.
We define the Non-Linear Back-Projection (NLBP) $x'$ w.r.t. $G$ given $x,y$ as follows:
\vspace{-0.1cm}\begin{equation}
    x' = G^{-1}\left( G(x) - G(g^\dagger(g(x))) + G(g^\dagger(y)) \right).
    \label{eq:nlbp_def}
\vspace{-0.1cm}\end{equation}
\end{definition}

\begin{claim}
For any $g^\dagger$ that satisfies the reflexive Penrose identities \eqref{eq:mp1},\eqref{eq:mp2},  $x'$ is on the pre-image of $y$, satisfying $g(x') = y$.
\end{claim}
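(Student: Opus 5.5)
We will work entirely in the completed coordinates $G(x) = [g(x) \,|\, q(x)]^T \in \mathcal{Y}\times\mathcal{Z}$. The observation that drives everything is that the $\mathcal{Y}$-block of $G$ is exactly $g$. So for any point $u = G(x)$ we can read off $g(x)$ as the first block of $u$. Since $G$ is a diffeomorphism onto $\mathcal{Y}\times\mathcal{Z}$, any vector $w\in\mathcal{Y}\times\mathcal{Z}$ has a well-defined preimage $G^{-1}(w)$, and $g(G^{-1}(w))$ equals the $\mathcal{Y}$-component of $w$. Here we treat $\mathcal{Y}\times\mathcal{Z}$ as a vector space, so that the sum and difference in \eqref{eq:nlbp_def} make sense. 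The proof therefore reduces to computing the $\mathcal{Y}$-component of the argument of $G^{-1}$ in \eqref{eq:nlbp_def}.

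The steps are as follows.

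\textbf{First,} the $\mathcal{Y}$-components of the three terms are $g(x)$, $g(g^\dagger(g(x)))$ and $g(g^\dagger(y))$.

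\textbf{Second,} apply the first reflexive identity \eqref{eq:mp1}, read as $g\circ g^\dagger\circ g = g$, to get $g(g^\dagger(g(x))) = g(x)$. The first two components therefore cancel.

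\textbf{Third,} show that $g(g^\dagger(y)) = y$. Since $g$ is surjective, pick any $x_0$ with $g(x_0)=y$. Then
\[
g(g^\dagger(y)) = g(g^\dagger(g(x_0))) = g(x_0) = y,
\]
again using \eqref{eq:mp1}. So the $\mathcal{Y}$-component of $G(x) - G(g^\dagger(g(x))) + G(g^\dagger(y))$ is $g(x)-g(x)+y = y$.

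\textbf{Finally,} apply $G^{-1}$ and read off the first block: $g(x') = y$. Note that the $\mathcal{Z}$-component, $q(x)-q(g^\dagger(g(x)))+q(g^\dagger(y))$, is arbitrary and plays no role in this claim.

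The only point requiring care is the third step: passing from the composition identity $gg^\dagger g = g$ to $gg^\dagger = \mathrm{Id}_{\mathcal{Y}}$. This is where surjectivity of $g$ is essential, and I will state it explicitly. The second identity \eqref{eq:mp2} is not needed for this claim; it only matters for the later projection-optimality statement.
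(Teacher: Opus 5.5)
Your proof is correct and follows the same route as the paper: read off the $\mathcal{Y}$-block of the argument of $G^{-1}$, cancel $g(x)$ against $g(g^\dagger(g(x)))$ via the first Penrose identity, and reduce $g(g^\dagger(y))$ to $y$. Your explicit derivation of $g\circ g^\dagger = \mathrm{Id}_{\mathcal{Y}}$ from surjectivity combined with the first identity tightens a step the paper justifies only by ``$g$ being surjective''; surjectivity alone would not suffice for an arbitrary $g^\dagger$.
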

\begin{proof}
By definition, $G(x)$ decomposes into $[g(x)| q(x)]^T$. The range component of the update in Eq. \eqref{eq:nlbp_def} is:
\vspace{-0.1cm}\begin{equation}
    g(x') = g(x) - g(g^\dagger(g(x))) + g(g^\dagger(y)).
\vspace{-0.1cm}\end{equation}
The second of the three terms in the right hand side, reduces to $g(x)$ according to First Penrose identity \eqref{eq:mp1}. The third reduces to $y$ due to $g$ being surjective. Thus:
\vspace{-0.1cm}\begin{equation}
    g(x') = g(x) - g(x) + y = y.
\vspace{-0.1cm}\end{equation}
\end{proof}

\begin{claim}
The update $x'$ is the orthogonal projection of $x$ onto the solution manifold $g^{-1}(y)$ in the $G$-metric.
\end{claim}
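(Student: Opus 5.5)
The idea is to do the whole computation in the completed coordinates $w = G(x) = [g(x)\,|\,q(x)]^T \in \mathcal{Y}\times\mathcal{Z}$. Here "the $G$-metric" means the pullback distance $d_G(x_1,x_2) = \|G(x_1)-G(x_2)\|_2$. Because $G$ is a diffeomorphism, it maps the solution manifold $g^{-1}(y)$ bijectively onto the affine slice
\begin{equation*}
S_y = \{(y,z) : z \in \mathcal{Z}\} \subset \mathcal{Y}\times\mathcal{Z}.
\end{equation*}
The claim then reduces to a Euclidean fact: the orthogonal projection of $w=(g(x),q(x))$ onto $S_y$ is $(y,q(x))$, since $S_y$ is a translate of $\{0\}\times\mathcal{Z}$ and only the $\mathcal{Y}$-coordinate has to change. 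It therefore suffices to show $G(x') = (y, q(x))$. Minimality then follows from
\begin{equation*}
\|G(\tilde x)-G(x)\|_2^2 = \|y-g(x)\|^2 + \|q(\tilde x)-q(x)\|^2 \ \ge\ \|y-g(x)\|^2 = \|G(x')-G(x)\|_2^2
\end{equation*}
for every $\tilde x\in g^{-1}(y)$. Equality holds iff $q(\tilde x)=q(x)$, which by injectivity of $G$ gives uniqueness.

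The key step is to identify $G(g^\dagger(y))$ when $g^\dagger$ is the natural PInv of Eq.~\eqref{eq:nonlin_pinv}. Write $G(0)=(g(0),z_0)$ with $z_0=q(0)$. For $x\in g^{-1}(y)$ we have
\begin{equation*}
\|G(x)-G(0)\|^2 = \|y-g(0)\|^2 + \|q(x)-z_0\|^2 .
\end{equation*}
The first term is constant on the pre-image, so the minimizer is the point with $q(x)=z_0$. Surjectivity of $q$ and bijectivity of $G$ make this point exist and be unique. Hence
\begin{equation*}
G(g^\dagger(y)) = (y, z_0) \quad \text{for every } y,
\end{equation*}
so $g^\dagger$ picks out the fixed section $\{q=z_0\}$. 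Applying this to $y$ and to $g(x)$ gives
\begin{equation*}
G(x) - G(g^\dagger(g(x))) + G(g^\dagger(y)) = (g(x),q(x)) - (g(x),z_0) + (y,z_0) = (y,q(x)).
\end{equation*}
Applying $G^{-1}$ yields $G(x')=(y,q(x))$, which together with the first paragraph proves the claim. The range component alone matches the previous claim, which already gave $g(x')=y$.

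The main obstacle is not the algebra but the interpretation and the implicit assumptions behind it. First, the difference $G(x)-G(g^\dagger g(x))+G(g^\dagger y)$ must lie in the range of $G$; here that means $\mathcal{Y}\times\mathcal{Z}$ must be a vector space, which I will assume (e.g.\ Euclidean). Second, "orthogonal projection in the $G$-metric" must be read as nearest-point projection under the pullback distance $d_G$, not under a Riemannian metric on $\mathcal{X}$. I will state this reading explicitly. I will also remark why the natural PInv is needed. For a general reflexive inverse, $G(g^\dagger(y))=(y,s(y))$ for some $y$-dependent section $s$, and the update gives
\begin{equation*}
G(x') = \bigl(y,\ q(x)-s(g(x))+s(y)\bigr),
\end{equation*}
which differs from $(y,q(x))$ unless $s$ is constant. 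This is exactly the sense in which the claim singles out Eq.~\eqref{eq:nonlin_pinv}.
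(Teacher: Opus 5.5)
Your proof is correct, and its core step matches the paper's. Both arguments work in the completed coordinates $G(x)=[g(x)\,|\,q(x)]^T$ and use the fact that the natural PInv always lands on the section $q=q(0)$. The $q$-terms therefore cancel, and the latent displacement is $[y-g(x)\,|\,0]^T$, i.e.\ $G(x')=(y,q(x))$.

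You differ in how you finish. The paper checks that this displacement has zero inner product with every pushed-forward tangent vector $dG(v)=[0\,|\,dq]^T$ of the solution manifold. That is an orthogonality-of-the-residual characterization, and it gives a nearest point only because $G(g^{-1}(y))$ is the affine slice $\{y\}\times\mathcal{Z}$; the paper leaves this implicit. You instead prove the Pythagorean inequality $\|G(\tilde x)-G(x)\|_2^2=\|y-g(x)\|^2+\|q(\tilde x)-q(x)\|^2\ge\|G(x')-G(x)\|_2^2$ over all $\tilde x\in g^{-1}(y)$. This gives global nearest-point optimality and uniqueness directly, under your explicitly stated pullback-distance reading of ``$G$-metric''.

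You also justify a step the paper only asserts: that the minimizer in Eq.~\eqref{eq:nonlin_pinv} has $q=q(0)$. On the pre-image the $\mathcal{Y}$-part of $\|G(x)-G(0)\|^2$ is constant, and bijectivity of $G$ onto $\mathcal{Y}\times\mathcal{Z}$ gives existence and uniqueness of $G^{-1}(y,q(0))$. Your closing computation with a non-constant section $s(y)=q(g^\dagger(y))$ also makes precise the paper's informal claim that the projection property singles out the natural PInv.
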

\begin{proof}
Let $\Delta x = x' - x$. We analyze the components of this displacement in the latent space of $G$.
The range component is fixed by the target: $g(x') - g(x) = y - g(x)$.
The null-space component is derived from Eq. \eqref{eq:nlbp_def}:
\vspace{-0.1cm}\begin{equation}
    q(x') - q(x) = q(g^\dagger(y)) - q(g^\dagger(g(x))).
\vspace{-0.1cm}\end{equation}
By Definition 4.2, $g^\dagger(\cdot)$ always selects the unique minimizer $q(z) = q(0)$. Thus, both terms on the RHS are identical constants ($q(0)$), and they cancel out:
\vspace{-0.1cm}\begin{equation}
    q(x') - q(x) = 0.
\vspace{-0.1cm}\end{equation}
The latent displacement vector is therefore \mbox{$\Delta G = [y - g(x)| 0]^T$}.
Any tangent vector $v$ to the solution manifold (where $g(z)=y$ is constant) must have the form $dG(v) = [0, dq]^T$.
The inner product is zero:
\vspace{-0.1cm}\begin{equation}
    \langle \Delta G, dG(v) \rangle = (y - g(x))^T \cdot 0 + 0^T \cdot dq = 0.\text{\qedhere}
\vspace{-0.1cm}\end{equation}\qedhere
\end{proof}

\vspace{-0.1cm}\section{Surjective Pseudo-invertible Neural Network (SPNN)}
\label{sec:method}

\begin{figure}
    \centering
    \includegraphics[width=0.9\linewidth]{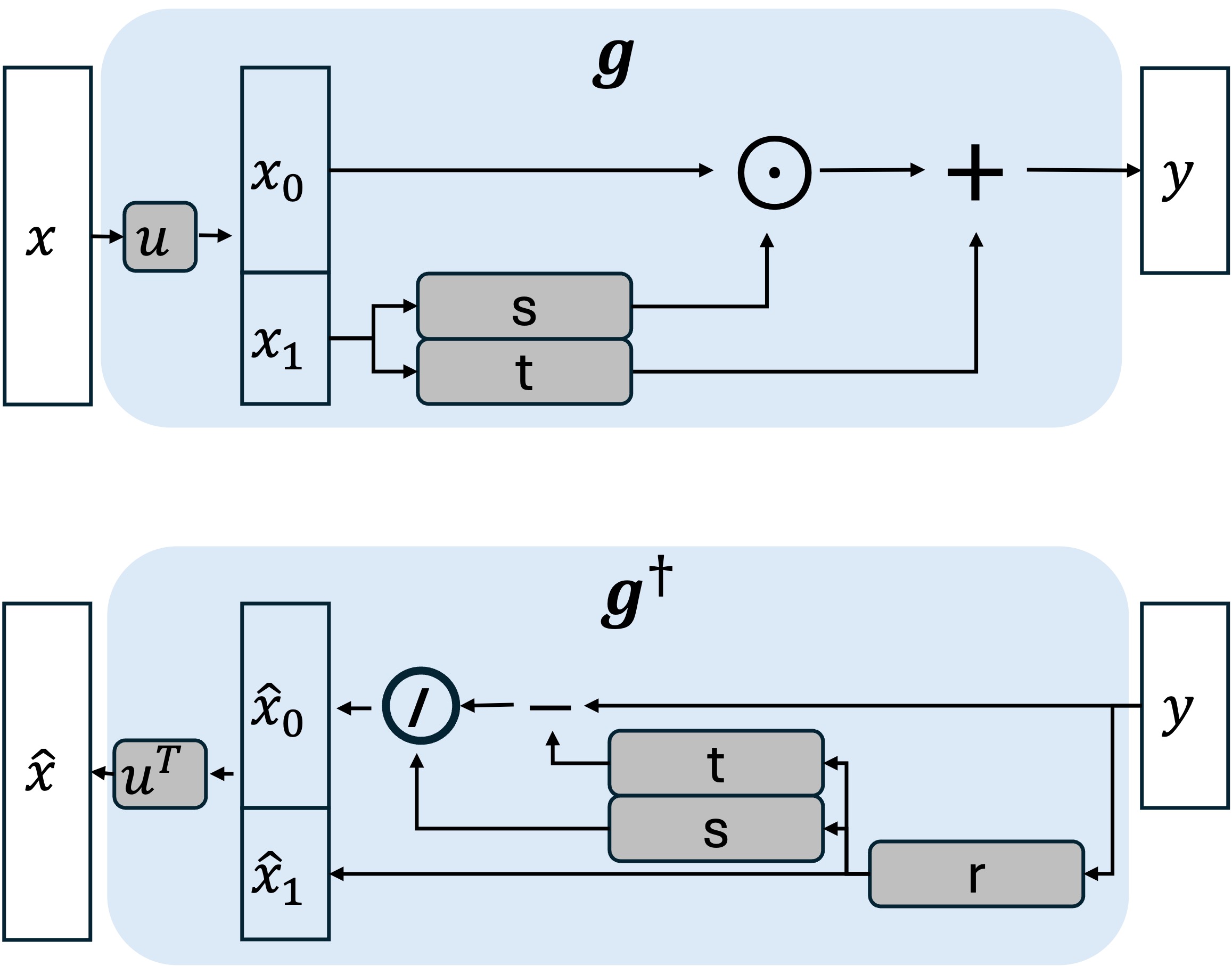}
    \caption{\small\textbf{The SPNN Block Architecture.} 
\textbf{Top (Forward $g$):} The input $x$ is rotated by $u$ and split. The null-space component $x_1$ modulates the signal $x_0$ via affine coupling layers ($s, t$) to produce the compressed output $y$. 
\textbf{Bottom (Pseudo-Inverse $g^\dagger$):} An auxiliary network $r$ predicts the missing null-space information $\hat{x}_1$ solely from $y$, allowing the reverse coupling flow to structurally reconstruct the high-dimensional pre-image $\hat{x}$.}
    \label{fig:diagram}
\end{figure}

In this section, we introduce the Surjective Pseudo-invertible Neural Network (SPNN). Unlike standard Invertible Neural Networks (INNs) that preserve dimensionality to ensure bijectivity, SPNNs are designed to be dimension-reducing ($d \le D$) while maintaining a rigorous, structural inverse.

\vspace{-0.1cm}\subsection{Surjective Pseudo-Invertible Block}
The fundamental building block of our architecture is the Affine Surjective Coupling Block. Let $x \in \mathbb{R}^D$ be the input to a layer. We first  partition the input into two vectors:
\vspace{-0.1cm}\begin{equation}
    x \rightarrow [x_0 \,|\, x_1], \quad \text{where } x_0 \in \mathbb{R}^d, \,\, x_1 \in \mathbb{R}^{D-d}.
\vspace{-0.1cm}\end{equation}

The forward operation $g(\cdot)$ is defined as an \textbf{affine coupling}, where $x_1$ modulates $x_0$ via learned scale and translation functions:
\vspace{-0.1cm}\begin{equation}
    y = x_0 \odot s(x_1) + t(x_1),
    \label{eq:forward}
\vspace{-0.1cm}\end{equation}
where $s, t: \mathbb{R}^{D-d} \to \mathbb{R}^d$ are arbitrary non-linear neural networks (e.g., MLPs or CNNs), and $\odot$ denotes element-wise multiplication. The output $y \in \mathbb{R}^d$ has strictly lower dimensionality than the input, making the operation inherently surjective.

To define the PInv $g^\dagger(y)$, we must resolve the unknown components $x_1$. We introduce a learnable auxiliary network $r: \mathbb{R}^d \to \mathbb{R}^{D-d}$ responsible for predicting the discarded vector from the compressed code $y$. The inverse operation proceeds in two steps (see Figure~\ref{fig:diagram}):
\begin{align}
    \hat{x}_1 &= r(y), \label{eq:inverse_r} \\
    \hat{x}_0 &= \frac{y - t(\hat{x}_1)}{s(\hat{x}_1)}. \label{eq:inverse_affine}
\end{align}
The reconstructed input is then formed by concatenating: $\hat{x} =  [\hat{x}_0 \,|\, \hat{x}_1]$.

\vspace{-0.1cm}\subsection{Two-Phase Training Strategy}
\label{subsec:training}
The training of an SPNN decomposes naturally into two distinct phases. This separation allows us to optimize the forward performance without compromise, and subsequently learn the optimal inverse manifold.

\vspace{-0.1cm}\paragraph{Phase I: Task Learning.}
First, we optimize the parameters of the forward operation ($\theta_g = \{s, t, U\}$) to approximate the target function. Depending on the problem, we minimize a task-specific objective $\mathcal{L}_{task}$ (e.g., Cross-Entropy for classification or MSE for compression). Crucially, because the auxiliary network $r$ is not involved in the forward pass (except for regularization), this phase is identical to training a standard feed-forward network or flow model. Once converged, the forward parameters $\theta_g$ are frozen.

\vspace{-0.1cm}\paragraph{Phase II: Natural Inverse Learning.}
In the second phase, we optimize the auxiliary network $r$ to satisfy the geometric requirements of the \textbf{Natural Non-Linear Pseudo-Inverse}. While the architecture guarantees the Penrose identities by construction, $r$ provides the degree of freedom necessary to select the specific solution that satisfies the minimality constraint.

We optimize $r$ to minimize the deviation of the solution from the geometric center of the bijective completion $G$:
\vspace{-0.1cm}\begin{equation}
    \mathcal{L}_{natural} = \mathbb{E}_{y} \left[ \| G(g^\dagger(y)) - G(0) \|_2^2 \right].
\vspace{-0.1cm}\end{equation}
By minimizing this loss, $r$ learns to predict the specific missing components ($\hat{x}_1$) that place the solution on the principal section of the manifold. This effectively forces $g^\dagger$ to become the unique Natural PInv (Definition 4.2) rather than an arbitrary generalized inverse.

\vspace{-0.1cm}\subsection{Theoretical Analysis}
We now prove that this specific affine construction satisfies the rigorous algebraic requirements of a generalized inverse.

\begin{theorem}[Surjectivity / Exact Right-Inverse]
The operator $g^\dagger$ defined in Eqs.~\eqref{eq:inverse_r}-\eqref{eq:inverse_affine} is a strict right-inverse of $g$, satisfying $g(g^\dagger(y)) = y$ for all $y \in \mathbb{R}^d$.
\end{theorem}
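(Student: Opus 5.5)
The argument is a direct substitution: compose the forward map \eqref{eq:forward} with the two-step inverse \eqref{eq:inverse_r}--\eqref{eq:inverse_affine} and verify that the output equals $y$. Fix an arbitrary $y \in \mathbb{R}^d$, set $\hat{x}_1 = r(y)$, and let $\hat{x}_0 = (y - t(\hat{x}_1)) \oslash s(\hat{x}_1)$, where the division is elementwise. The output of $g^\dagger(y)$ is then $\hat{x} = [\hat{x}_0 \,|\, \hat{x}_1]$. Apply $g$ to $\hat{x}$. The split step recovers exactly $x_0 = \hat{x}_0$ and $x_1 = \hat{x}_1$, because concatenation and partition are mutual inverses on $\mathbb{R}^d \times \mathbb{R}^{D-d}$. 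Therefore
\begin{equation*}
g(g^\dagger(y)) = \hat{x}_0 \odot s(\hat{x}_1) + t(\hat{x}_1) = \bigl(y - t(\hat{x}_1)\bigr) + t(\hat{x}_1) = y,
\end{equation*}
coordinate by coordinate. This holds regardless of what $r$ outputs. The role of $r$ is only to choose \emph{which} preimage is returned, never to determine whether $g^\dagger(y)$ is a preimage. This is the key structural point to state explicitly, since it means the identity holds for every setting of the Phase II parameters and not just at the optimum of $\mathcal{L}_{natural}$.

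The main obstacle, and the only real content, is well-definedness: the formula \eqref{eq:inverse_affine} requires $s(\hat{x}_1)$ to have no zero entries for every $\hat{x}_1$ in the range of $r$. I would make this an explicit standing assumption, following the standard affine-coupling convention (RealNVP), namely that the scale is parameterized as $s(\cdot) = \exp(\tilde{s}(\cdot))$, or otherwise kept bounded away from zero. Under this assumption $s(\hat{x}_1)_i \neq 0$ for all $i$, so $\bigl(y_i - t(\hat{x}_1)_i\bigr)/s(\hat{x}_1)_i \cdot s(\hat{x}_1)_i = y_i - t(\hat{x}_1)_i$ holds exactly, and the cancellation above is valid for all $y \in \mathbb{R}^d$. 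If $s$ could vanish, the statement would fail for any $y$ where $r(y)$ hits a zero of some coordinate of $s$, so I would flag this assumption rather than try to avoid it.

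If the block also includes the learned rotation $u$ shown in Figure~\ref{fig:diagram} (the parameter $U$ in $\theta_g$), I would add one line to handle it. The forward map applies the orthogonal $U$ before the split, and the inverse applies $U^{-1} = U^\top$ after the concatenation, so these cancel, since $U U^\top = \mathrm{Id}$. For a full SPNN formed as a composition $g = g_L \circ \cdots \circ g_1$ of such blocks, possibly interleaved with bijective layers, I would conclude by induction. With $g^\dagger = g_1^\dagger \circ \cdots \circ g_L^\dagger$, one gets $g \circ g^\dagger = \mathrm{Id}$, since each inner pair $g_k \circ g_k^\dagger$ collapses to the identity in turn. As a corollary, the first Penrose identity \eqref{eq:mp1} $g g^\dagger g = g$ follows immediately.
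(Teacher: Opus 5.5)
Your proof is correct and follows the paper's argument: substitute $\hat{x}_1 = r(y)$ and $\hat{x}_0 = (y - t(\hat{x}_1)) \oslash s(\hat{x}_1)$ into the affine forward map, and the $t$ terms cancel regardless of what $r$ outputs. Your additions are sound and go slightly beyond what the paper writes: the explicit nonvanishing-$s$ assumption (left implicit in the paper, though its implementation ensures it via $s = \exp(2\tanh(\cdot))$), and the extension through the rotation $U$ and multi-block compositions.
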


\begin{proof}
Let $y$ be an arbitrary vector. Applying the pseudo-inverse yields $\hat{x}_1 = r(y)$ and $\hat{x}_0 = (y - t(\hat{x}_1)) \oslash s(\hat{x}_1)$. 
Substituting these back into the forward operator $g$:
\begin{align*}
    g(\hat{x}) &= \hat{x}_0 \odot s(\hat{x}_1) + t(\hat{x}_1) \\
               &= \left( \frac{y - t(r(y))}{s(r(y))} \right) \odot s(r(y)) + t(r(y)) \\
               &= (y - t(r(y))) + t(r(y)) \\
               &= y.
\end{align*}
Thus, $g g^\dagger = I$, confirming surjectivity.
\end{proof}

\begin{theorem}[Reflexive Consistency]
The SPNN architecture satisfies the first two Penrose identities by construction, regardless of the choice of networks $s, t, r$.
\end{theorem}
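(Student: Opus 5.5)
Both identities follow from the exact right-inverse property established in the preceding theorem, $g\circ g^\dagger=\mathrm{Id}_{\mathbb{R}^d}$, together with associativity of composition. The specific forms of $s$, $t$, $r$ play no role beyond what that theorem already used. Implicitly we need $s(\hat x_1)$ to have no zero entries, so that Eq.~\eqref{eq:inverse_affine} is defined. I would state this standing assumption explicitly, e.g.\ by taking $s=\exp(\tilde s)$ as is standard for affine coupling. Beyond that, the claim ``regardless of the choice of networks'' is literally true.

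\textbf{First identity.} For any $x\in\mathbb{R}^D$, put $y=g(x)$. By the Surjectivity theorem applied at this $y$,
\[
g(g^\dagger(g(x))) = (g\circ g^\dagger)(y) = y = g(x),
\]
so $g g^\dagger g = g$.

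\textbf{Second identity.} For any $y\in\mathbb{R}^d$, apply $g^\dagger$ to both sides of $g(g^\dagger(y))=y$, which is valid because $g^\dagger$ is a function:
\[
g^\dagger(g(g^\dagger(y))) = g^\dagger(y),
\]
so $g^\dagger g g^\dagger = g^\dagger$.

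\textbf{Where care is needed.} I do not expect a real obstacle; the argument is essentially one line per identity. The only subtle point is the word ``construction.'' The second identity holds because $g^\dagger$ depends on its input only through $y$: the missing component is always reconstructed as $r(y)$. So $g^\dagger$ is a deterministic single-valued map, and it returns the same point for the same $y$ regardless of which pre-image produced that $y$. I would emphasize this to contrast with stochastic inverses, such as SurVAE sampling, where the second identity can fail.

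If the architecture stacks several blocks with rotations $u$ in between, I would extend the argument by induction on depth. The rotations are bijective, and a composition of maps each satisfying $g_i\circ g_i^\dagger=\mathrm{Id}$ again satisfies it, since the inner factors telescope. The two identities then follow exactly as above.
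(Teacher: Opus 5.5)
Your proof is correct and is essentially the paper's own argument: both identities come from composing the right-inverse identity $g\circ g^\dagger=\mathrm{Id}$ of the preceding theorem with $g$ on the right and with $g^\dagger$ on the left, respectively. Your caveat that $s$ must have no zero entries (the paper's implementation meets it via $s=\exp(2\tanh(\cdot))$) and your telescoping extension to stacked blocks with orthogonal mixing are sound additions that the paper leaves implicit.
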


\begin{proof}
\textbf{Identity 1 ($g g^\dagger g = g$):}
Since $g g^\dagger = I$ (Theorem 1), we have $(g g^\dagger) g = I \cdot g = g$.

\textbf{Identity 2 ($g^\dagger g g^\dagger = g^\dagger$):}
Similarly, substituting the identity into the right side: $g^\dagger (g g^\dagger) = g^\dagger \cdot I = g^\dagger$.
\end{proof}

\vspace{-0.1cm}\subsection{Implementation Details}
\label{subsec:implementation}

\vspace{-0.1cm}\paragraph{Multi-Scale Architecture.}
The SPNN is constructed as a deep sequence of reversible blocks. To process high-dimensional image data effectively, we employ a multi-scale architecture. We utilize the \texttt{PixelUnshuffle} operation to trade spatial resolution for channel depth, effectively reducing the spatial dimensions while quadrupling the number of channels. This downsampled representation is then processed by one or more SPNN blocks. Each block splits the channel dimensions, passing a subset of "informative" channels ($x_0$) to the next stage while discarding the "redundant" channels ($x_1$) via the forward operation. This hierarchical reduction allows the network to capture semantic features at varying scales while progressively reducing the dimensionality towards the target size $d$.

\vspace{-0.1cm}\paragraph{Orthogonal Mixing.}
Standard coupling layers operate on fixed channel splits (e.g., splitting the first $k$ channels). However, adjacent channels in image data (like RGB) are often highly correlated, and a naive split may restrict the expressivity of the null-space. To address this, we precede each split operation with a learnable unitary matrix $U$, parameterized via the Cayley transform to ensure exact orthogonality ($U^\top U = I$) by construction. This learnable rotation allows the network to discover the optimal basis for separating the signal into "content" ($x_0$) and "redundancy" ($x_1$) before the information is discarded.

\vspace{-0.1cm}\paragraph{Auxiliary Stability Losses.}
While the SPNN architecture structurally guarantees the inverse properties, finite-precision arithmetic can lead to numerical drift. To mitigate this, we employ two auxiliary losses throughout the training phases.
First, we apply a \textit{Surjectivity Consistency} loss: $\| y - g(g^\dagger(y)) \|_2^2$. Although theoretically zero by construction, minimizing this term explicitly combats floating-point error accumulation during deep inverse passes.
Second, we apply a \textit{Pseudo-Inverse Stability} loss: $\| x - g^\dagger(g(x)) \|_2^2$. Since $d < D$, exact reconstruction is impossible; however, minimizing this expectation encourages the pseudo-inverse to map outputs back to the typical range of the data distribution. This acts as a regularizer, preventing the inverse path from generating extreme numerical values that could destabilize the training dynamics.

\vspace{-0.1cm}\section{Iterative Restoration via Non-Linear Back-Projection}
\label{sec:application}
The properties of the SPNN and NLBP open a new avenue for solving non-linear inverse problems. While standard methods focus on linear corruptions (blur, in-painting), many real-world "degradations" are inherently non-linear. These range from complex hardware distortions (ISP pipelines, JPEG compression) to semantic abstractions, where the "degradation" is a classifier or a captioning network effectively reducing an image to a label or a string. SPNNs allow us to expose the null-space structure of such operators for controlled restoration.

\vspace{-0.1cm}\subsection{Modeling Degradations}
We treat the degradation process as a black box $\mathcal{D}(x)$. We train an SPNN $g$ to approximate this operator ($g(x) \approx \mathcal{D}(x)$) using the training methodology described in Section~\ref{sec:method}. 

\vspace{-0.1cm}\subsection{Zero-Shot Restoration with Diffusion}
We propose an iterative restoration algorithm inspired by linear methods like DDNM~\cite{wang2022zero}, utilizing our \textbf{Natural Non-Linear Back-Projection (NLBP)} to guide the generation.

\begin{figure}
    \centering
    \includegraphics[width=0.9\linewidth]{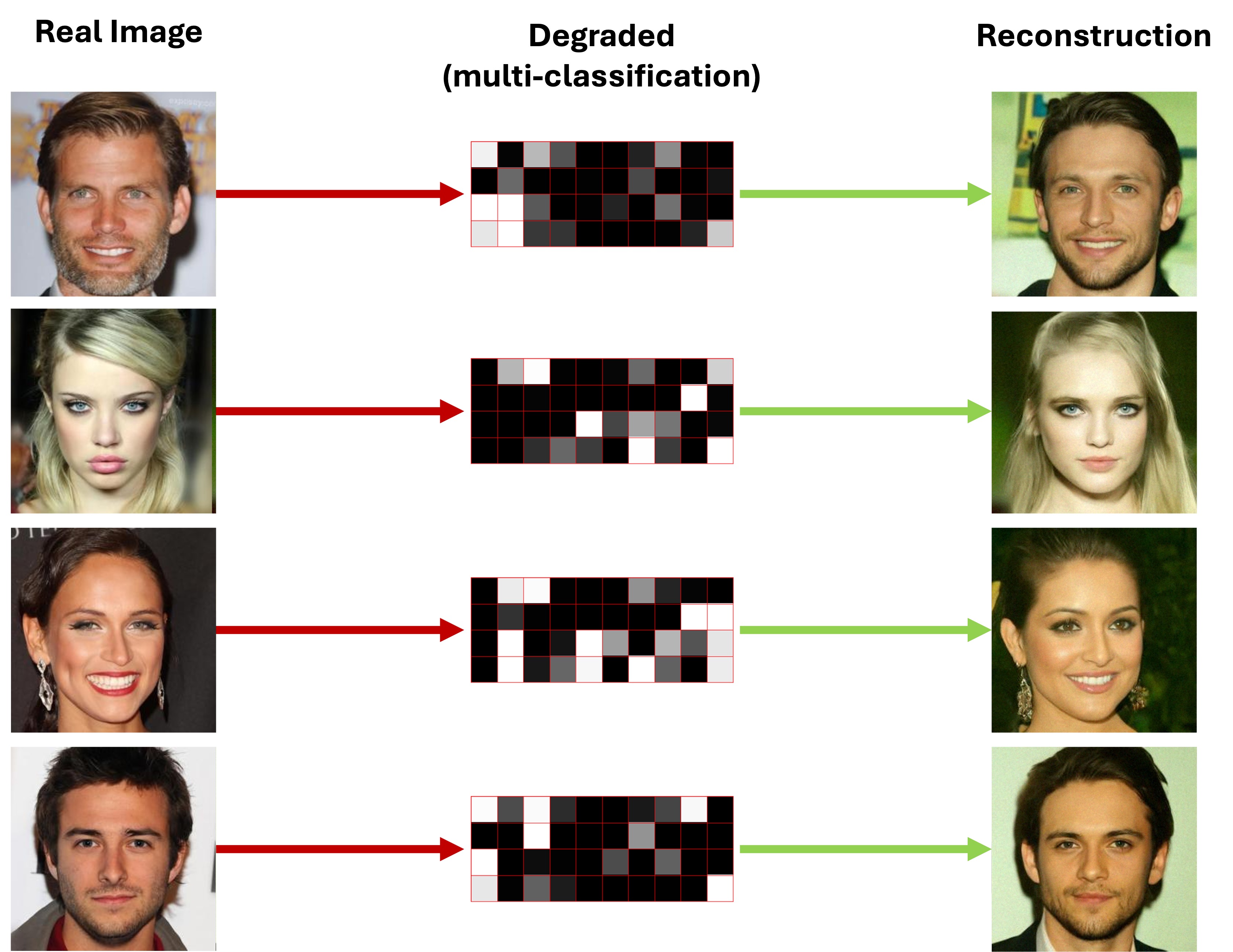}
    \caption{\small \textbf{Reconstruction from Semantics.} We project a clean image (Left) into its 40-dimensional attribute vector (as a row-major grid, middle), see Appendix \ref{app:attribs} or fig.~\ref{fig:attrib_analysis} for a list of attributes. Using only this low-dimensional semantic code, our NLBP guides the diffusion model to reconstruct a photorealistic face (Right) that faithfully preserves the attributes of the original subject.}
    \label{fig:celeba_recon}
\end{figure}

We integrate the NLBP into a pre-trained diffusion sampling loop. At each timestep $t$, given a clean estimate $\hat{x}_{0|t}$ and a target observation $y$, we apply a "gentle" version of our projection to enforce consistency. The update rule is defined as:
\vspace{-0.1cm}\begin{equation}
    \hat{x}'_{0|t} = G^{-1}\left( G(\hat{x}_{0|t}) + \lambda \left[ G(g^\dagger(y)) - G(g^\dagger(g(\hat{x}_{0|t}))) \right] \right),
    \label{eq:gentle_nlbp}
\vspace{-0.1cm}\end{equation}
where $\lambda \in [0, 1]$ is a guidance scale.

Substituting the latent decomposition of the difference term ($[y - g(\hat{x}_{0|t}) \mid 0]^T$) reveals the mechanics of this update:
\begin{align}
    g(\hat{x}'_{0|t}) &= (1 - \lambda)g(\hat{x}_{0|t}) + \lambda y, \\
    q(\hat{x}'_{0|t}) &= q(\hat{x}_{0|t}).
\end{align}
This formulation allows us to steer the semantic content (range) towards the observation $y$ via linear interpolation, while the high-frequency details (null-space) generated by the diffusion prior remain strictly invariant ($q_{new} = q_{old}$). This enables the reconstruction of consistent images from highly compressed inputs, such as class labels or thumbnails, without disrupting the texture coherence of the generative model.

\begin{algorithm}[H]
   \caption{\small  Non-Linear Null-Space Projection (Simplification)}
   \begin{algorithmic}
   \STATE \textbf{Input:} Measurement $y$, Diffusion Model $\epsilon_\theta$, SPNN $G$, Scale $\lambda$
   \STATE Initialize $x_T \sim \mathcal{N}(0, I)$
   \FOR{$t = T, \dots, 1$}
       \STATE $\hat{x}_{0|t} \leftarrow \text{Predict } x_0 \text{ from } x_t \text{ using } \epsilon_\theta(x_t, t)$
       \STATE $\hat{x}'_{0|t} \leftarrow \text{NLBP}(\hat{x}_{0|t}, y, \lambda)$ \COMMENT{Eq.~\ref{eq:gentle_nlbp}}
       \STATE $x_{t-1} \leftarrow \text{Sample } q(x_{t-1} | \hat{x}'_{0|t}, x_t)$
   \ENDFOR
   \STATE \textbf{Return} $x_0$
   \end{algorithmic}
\end{algorithm}

\vspace{-0.1cm}\section{NLBP Implementation Details}
\label{app:nlbp_details}

We implement the Non-Linear Back-Projection (NLBP) within a standard diffusion sampling framework, adopting the structure of DDNM~\cite{wang2022zero}.

\vspace{-0.1cm}\paragraph{Diffusion Backbone.}
We utilize a pre-trained unconditional DDPM with $T=1000$ timesteps. The reverse diffusion process follows the standard noise schedule $\beta_t$. At each timestep $t$, we estimate $\hat{x}_{0|t}$ via the denoising network and apply the NLBP correction before re-sampling $x_{t-1}$.

\vspace{-0.1cm}\paragraph{Reconstruction Strategy.}
We do not start applying NLBP from the first sampling step. We find that in early steps, the predicted images is to blurry and noisy and is out of distribtuion for the degradation network, so we wait for the generated image to be closer to distribution. Out of the 1000 steps (going $1000 \rightarrow 0$) we start at step 800 for the semantic reconstruction task and from 500 for the attribute-controlled generation task.

\vspace{-0.1cm}\paragraph{Adaptive Lambda.}
For single-attribute editing (Figure~\ref{fig:celeba_attribs}) and multi-attribute editing (Figure~\ref{fig:multi}), we scale $\lambda$ dynamically based on the discrepancy between the current state and the target. Let $\delta_n = |y_{cur}[n] - y_{target}[n]|$ be the distance for the controlled attribute $n$. We set $\lambda_t = \alpha \cdot \tanh(\gamma \cdot \delta_n)$, ensuring stronger guidance when the attribute is far from the target and gentler updates as it converges.

\vspace{-0.1cm}\section{Experiments}
\label{sec:experiments}

We evaluate the proposed framework on a challenging class of non-linear inverse problems: \textbf{Semantic Restoration}. In this setting, the forward operator is a semantic abstraction (classification), meaning the "measurement" contains no spatial information, only high-level descriptors.

\vspace{-0.1cm}\subsection{Data and Setup}
\label{subsec:data_setup}
We utilize the CelebA-HQ dataset, downscaled to $256 \times 256$ resolution. The dataset contains facial images annotated with 40 binary attributes (e.g., \textit{Male, Smiling, Eyeglasses}).\\
\textbf{Forward Operator ($g$).} We define the non-linear operator as the mapping from pixel space to the 40-dimensional attribute logit space. We train an SPNN ($g_\theta$) directly on the image-attribute pairs to approximate this function.\\
\textbf{Generative Prior.} We employ a standard unconditional DDPM trained on the same $256 \times 256$ CelebA-HQ images to serve as the generative prior.

\vspace{-0.1cm}\subsection{Results}

\vspace{-0.1cm}\paragraph{Reconstruction from Semantics.}
We aim to reconstruct a face given \textit{only} its semantic classification. We first extract the ground-truth logits $y_{GT} = g(x_{orig})$ from an unseen test image. We then perform the diffusion sampling process guided by NLBP (Eq.~\ref{eq:gentle_nlbp}) with $y_{GT}$ as the fixed target.
Fig~\ref{fig:celeba_recon} visualizes the results. The method successfully hallucinates a valid pre-image that matches the semantic constraints while filling in the null-space with plausible details.

\begin{figure}
    \centering
    \includegraphics[width=1.1\linewidth]{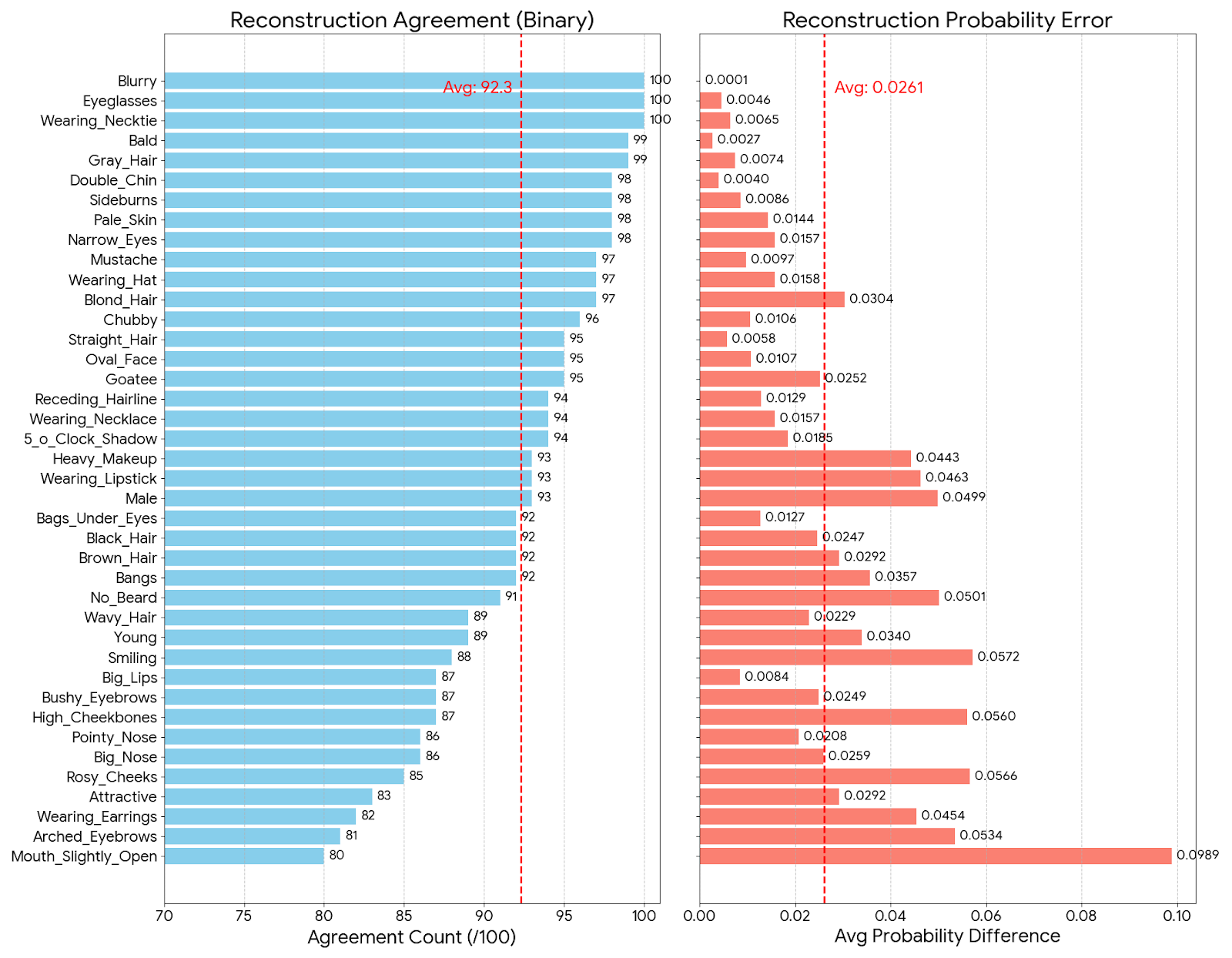}
    \caption{\small \small \textbf{Quantitative Attribute Reconstruction.} Analysis of 100 test samples. \textbf{Left:} Binary agreement rate. \textbf{Right:} Mean absolute error in probability space.}
    \label{fig:attrib_analysis}
\end{figure}

To quantify this fidelity, we computed the reconstruction accuracy over 100 randomly selected test samples (Figure~\ref{fig:attrib_analysis}). We observe a mean binary agreement of 92.3\% across all 40 attributes. As shown in the breakdown, distinct structural attributes such as \textit{Eyeglasses}, \textit{Hats}, and \textit{Neckties} are reconstructed with near-perfect consistency ($>97\%$). Conversely, the minor error variance is concentrated in subjective or ambiguous features (e.g., \textit{High Cheekbones}, \textit{Arched Eyebrows}), where the classifier's own uncertainty is typically higher.

\begin{figure}
    \centering
    \includegraphics[width=0.9\linewidth]{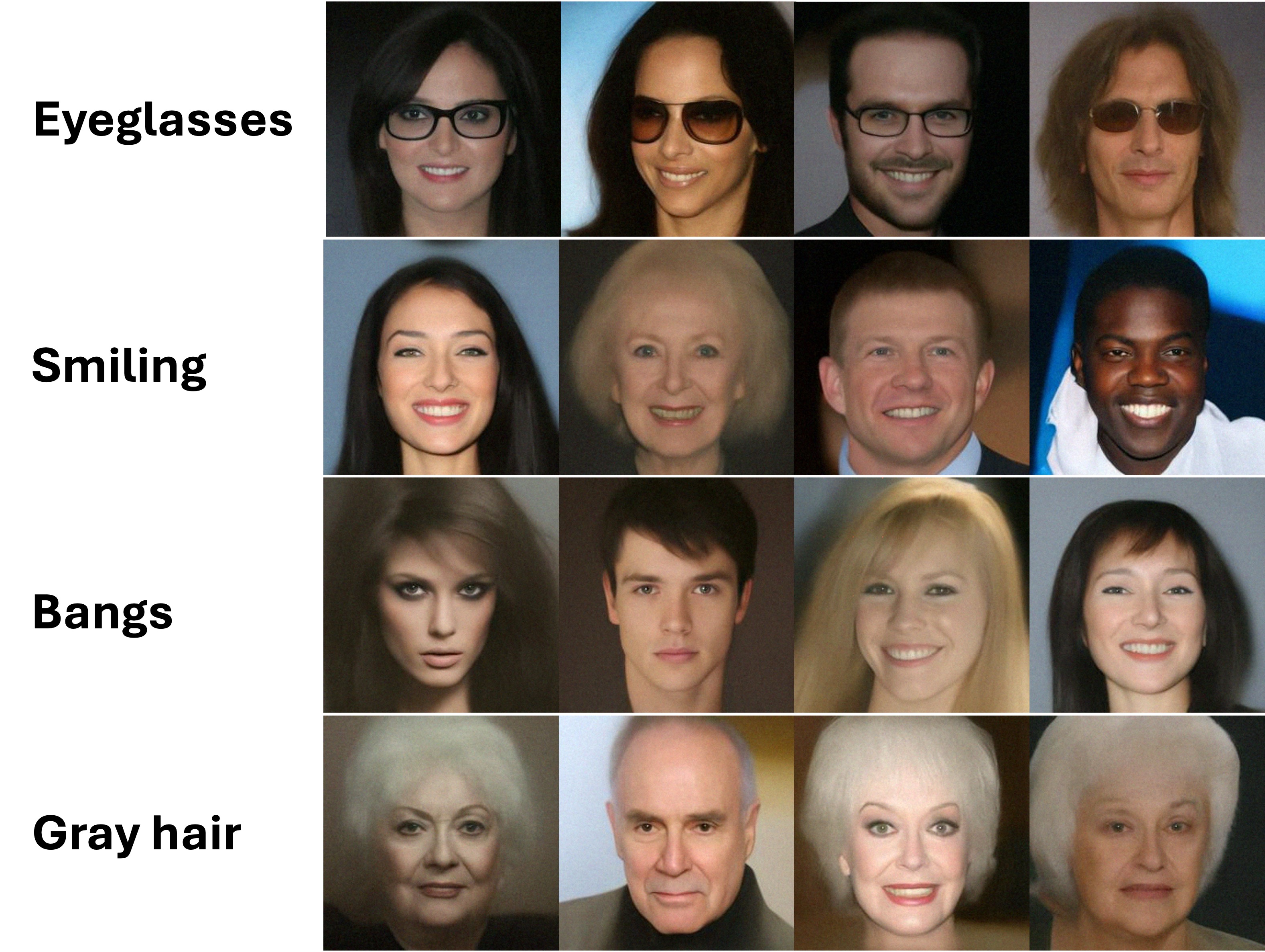}
    \caption{\small \textbf{Single Attribute Editing.} By dynamically enforcing a specific target index in the attribute vector (e.g., forcing the "Eyeglasses" logit to be high), we can generate diverse samples that strictly adhere to the condition while hallucinating the rest of the image freely. This demonstrates that the SPNN has successfully disentangled the specific semantic attribute from the null-space.}
    \label{fig:celeba_attribs}
\end{figure}

\vspace{-0.1cm}\paragraph{Attribute-Controlled Generation.}
We demonstrate the ability to control a specific attribute during generation without fixing the entire semantic vector. Here, we do not use a static target $y$. Instead, at each timestep $t$, we compute the semantic state of the current estimate: $y_{cur} = g(\hat{x}_{0|t})$.
To enforce a target attribute $n$ (e.g., "Eyeglasses"), we construct a dynamic target $y_{target}$ by copying $y_{cur}$ and modifying only the $n$-th index:
\vspace{-0.1cm}\begin{equation}
    y_{target}[k] = \begin{cases} 
    \mu_n + 2\sigma_n & \text{if } k = n \\
    y_{cur}[k] & \text{otherwise}
    \end{cases}
\vspace{-0.1cm}\end{equation}
where $\mu_n$ and $\sigma_n$ denote the empirical mean and standard deviation
of attribute $n$ computed over the training set.
We then apply the NLBP step using this $y_{target}$. Because $y_{target}$ matches $y_{cur}$ on all other dimensions, the update vector in the latent space is sparse (zero everywhere except at index $n$). This allows the diffusion model to freely hallucinate all other attributes, constrained only by the single trait we wish to enforce. Figure~\ref{fig:celeba_attribs} shows that this dynamic guidance yields diverse, high-quality images that strictly adhere to the requested attribute. This capability naturally extends to compositional generation; as shown in Figure~\ref{fig:multi}, we can simultaneously enforce multiple constraints (e.g., \textit{Male} + \textit{Eyeglasses} + \textit{Smiling}), effectively guiding the diffusion process to the intersection of these semantic manifolds.

\subsection{Ablation Study}
\label{subsec:ablation}

To validate our framework, we compare against four ablated baselines: combining either a \textbf{Random} or \textbf{Minimum-Norm} auxiliary network $r$. To make sure the results are not a consequence of our proposed NLBP, We test them also on basic naive non-linear back-projection ($x' = x + g^\dagger(y) - g(g^\dagger(x))$).
Fig~\ref{fig:ablation} demonstrates that all ablated configurations result in catastrophic failure. Moreover, we applied the classifier to the generated failure images and none of them got logits close to the target. This confirms the algebra, showing that a change of metric is needed in order to adapt PInv and Back-Projection to the non-linear regime.

\vspace{-0.1cm}\section{Discussion and Conclusion}
\label{sec:discussion}

\vspace{-0.1cm}\paragraph{Limitations}
Our framework has two primary limitations. First, the validity of the "Natural" PInv depends on the expressivity of the auxiliary network $r$. If $r$ fails to capture the true null-space statistics, the pseudo-inverse remains algebraically correct ($g g^\dagger = I$) but may yield unrealistic pre-images. Second, we assume the forward operator $g$ is surjective. In the general case where $y$ lies outside the range of $g$, our defintion is valid but $G$ has more degrees of freedom. Defining a "natural" unique PInv remains an open question.

\vspace{-0.1cm}\paragraph{Future Work}
We envision three key extensions. First, we plan to apply SPNNs to complex degradations such as object detection, or optical distortions. Second, building on ~\cite{berman2025said}, we propose that surjectivity, rather than strict invertibility, is sufficient for linearity, potentially enabling more efficient "Linearizers". Finally, we intend to replace Latent Diffusion Models' VAE encoders with SPNN blocks to solve cycle-consistency issues ($E(D(z)) \neq z$).

\vspace{-0.1cm}\paragraph{Conclusion}
In this work, we introduced the {Natural Non-Linear Pseudo-Inverse} via the concept of Affine Bijective Completion, offering a rigorous generalization of the linear Moore-Penrose inverse to deep networks. We realized this theory through the {Surjective Pseudo-invertible Neural Network (SPNN)}, a dimension-reducing architecture that maintains an exact structural right-inverse. By combining SPNNs with our proposed {Non-Linear Back-Projection (NLBP)}, we demonstrated a zero-shot restoration framework capable of inverting abstract semantic degradations. As deep learning increasingly replaces linear operators in scientific computing, we believe this principled formulation provides the essential mathematical bridge for analyzing and solving complex non-linear inverse problems.

\begin{figure}[h!]
    \centering
    \includegraphics[width=0.9\linewidth]{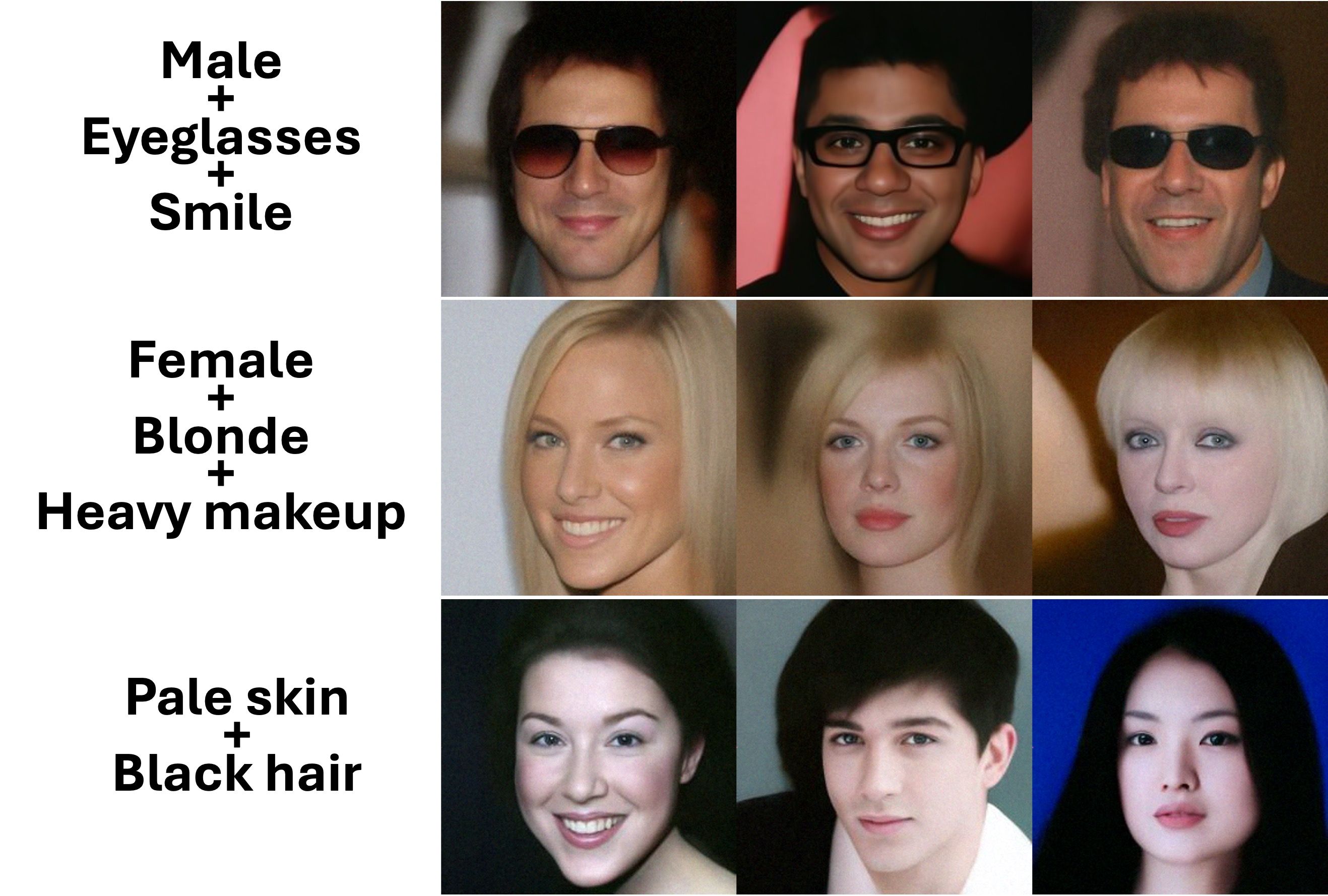}
    \caption{\small \textbf{Multi-Attribute Conditional Generation.} simultaneously enforcing constraints by fixing multiple bits in the target $y$.}
    \label{fig:multi}
\end{figure}

\begin{figure}
    \centering
    \includegraphics[width=0.9\linewidth]{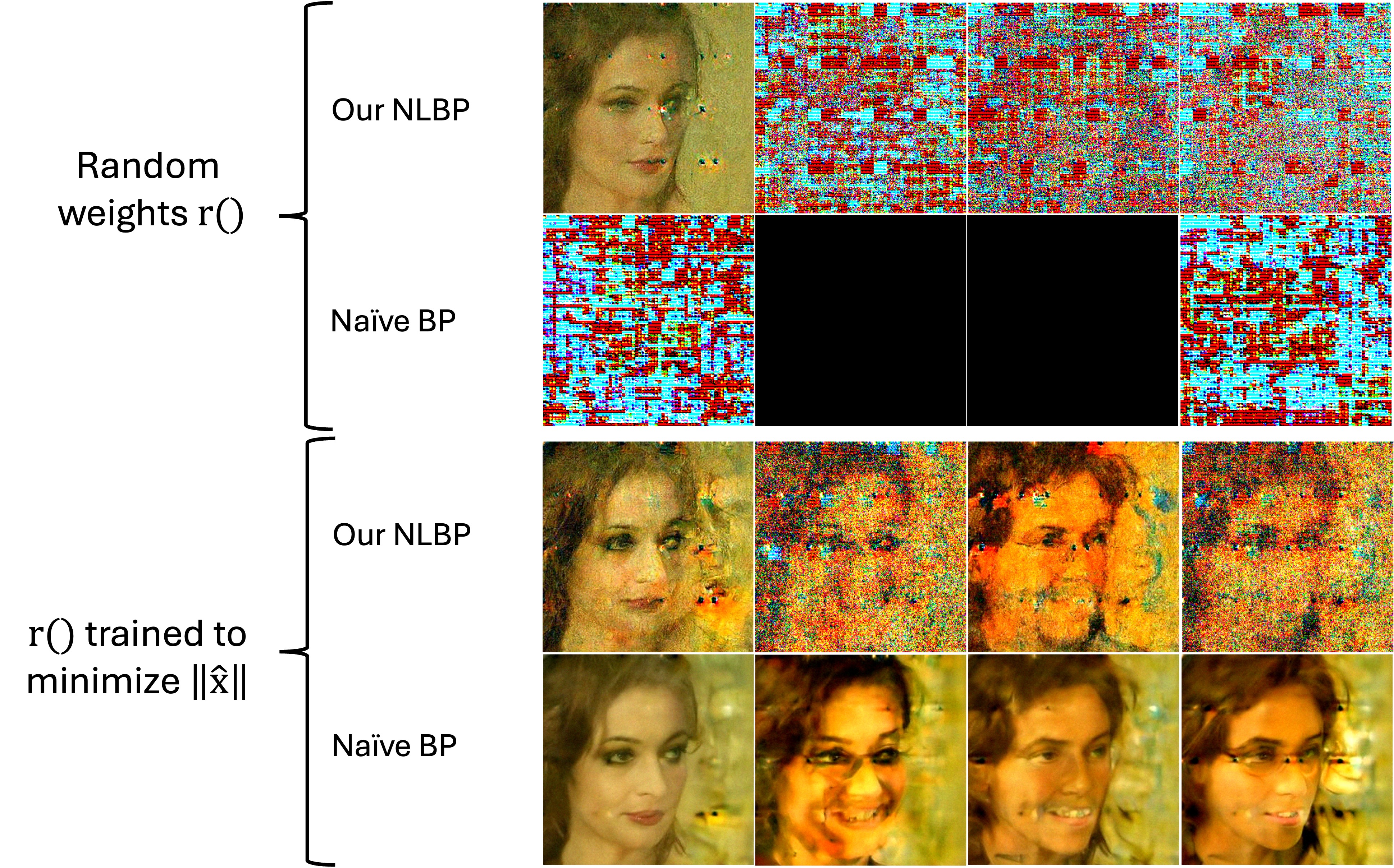}
    \caption{\small \textbf{Ablation Study.} We attempt to reconstruct the same four test samples from Figure~\ref{fig:celeba_recon} using ablated baselines (Random/Min-Norm $r$ $\times$ Naive/Gentle BP). In all configurations, the failure to learn the "Natural" manifold or apply gentle guidance causes the diffusion process to diverge into high-frequency noise.}
\label{fig:ablation}
    \label{fig:placeholder}
\end{figure}

\clearpage
\vspace{-0.1cm}\section*{Acknowledgments}
A.S. is supported by the Chaya Career Advancement Chair.

\vspace{-0.1cm}\section*{Impact Statement}
This paper presents a theoretical framework for defining and computing non-linear pseudo-inverses in deep neural networks. The primary goal of this work is to advance the mathematical foundations of inverse problems, offering rigorous tools for solving ill-posed challenges in scientific computing, medical imaging, and physics where non-linear measurement operators are prevalent.

We demonstrate the efficacy of our method through semantic restoration and attribute editing on facial images. We acknowledge that techniques for controlling generative models, such as the one proposed here, carry potential risks related to the creation of synthetic media and deepfakes. While our method is designed to enforce consistency with semantic constraints, the ability to modify attributes or reconstruct plausible identities from abstract descriptors could be misused for image manipulation.

Furthermore, our definition of the "Natural" pseudo-inverse relies on learning the geometry of the training data. Consequently, any biases present in the training dataset (e.g., correlations between specific demographic attributes and visual features) will be inherently captured by the Surjective Pseudo-invertible Neural Network (SPNN) and reflected in the reconstructed solutions. Researchers and practitioners applying this framework should be aware that the "canonical" solution provided by the model is only as neutral as the data distribution it was trained on.

\bibliography{example_paper}

@article{feistel1973cryptography,
  title={Cryptography and computer privacy},
  author={Feistel, Horst},
  journal={Scientific American},
  volume={228},
  number={5},
  pages={15--23},
  year={1973}
}

@article{dinh2014nice,
  title={NICE: Non-linear Independent Components Estimation},
  author={Dinh, Laurent and Krueger, David and Bengio, Yoshua},
  journal={arXiv preprint arXiv:1410.8516},
  year={2014}
}

@inproceedings{dinh2017density,
  title={Density estimation using Real NVP},
  author={Dinh, Laurent and Sohl-Dickstein, Jascha and Bengio, Samy},
  booktitle={ICLR},
  year={2017}
}

@inproceedings{kingma2018glow,
  title={Glow: Generative Flow with Invertible 1x1 Convolutions},
  author={Kingma, Durk P and Dhariwal, Prafulla},
  booktitle={NeurIPS},
  year={2018}
}

@inproceedings{trockman2021orthogonalizing,
  title={Orthogonalizing Convolutional Layers with the Cayley Transform},
  author={Trockman, Asher and Kolter, J Zico},
  booktitle={ICLR},
  year={2021}
}

@inproceedings{nielsen2020survae,
  title={SurVAE Flows: Surjections to Bridge the Gap between VAEs and Flows},
  author={Nielsen, Didrik and Jaini, Priyank and Hoogeboom, Emiel and Winther, Ole and Welling, Max},
  booktitle={NeurIPS},
  year={2020}
}

@article{gofer2023generalized,
  title={Generalized Inversion of Nonlinear Operators},
  author={Gofer, Eyal and Gilboa, Guy},
  journal={arXiv preprint arXiv:2111.10755},
  year={2023}
}

@inproceedings{kawar2022denoising,
  title={Denoising Diffusion Restoration Models},
  author={Kawar, Bahjat and Elad, Michael and Ermon, Stefano and Song, Jiaming},
  booktitle={NeurIPS},
  year={2022}
}

@inproceedings{wang2023zero,
  title={Zero-Shot Image Restoration Using Denoising Diffusion Null-Space Model},
  author={Wang, Yinhuai and Yu, Jiwen and Zhang, Jian},
  booktitle={ICLR},
  year={2023}
}

@inproceedings{kawar2021snips,
  title={SNIPS: Solving Noisy Inverse Problems Stochastically},
  author={Kawar, Bahjat and Vaksman, Gregory and Elad, Michael},
  booktitle={NeurIPS},
  year={2021}
}

@inproceedings{chung2023diffusion,
  title={Diffusion Posterior Sampling for General Noisy Inverse Problems},
  author={Chung, Hyungjin and Kim, Jeongsol and Mccann, Michael T and Klasky, Marc L and Ye, Jong Chul},
  booktitle={ICLR},
  year={2023}
}

@inproceedings{chung2022improving,
  title={Improving Diffusion Models for Inverse Problems using Manifold Constraints},
  author={Chung, Hyungjin and Sim, Byeongsu and Ryu, Dohoon and Ye, Jong Chul},
  booktitle={NeurIPS},
  year={2022}
}

@article{irani1991improving,
  title={Improving resolution by image registration},
  author={Irani, Michal and Peleg, Shmuel},
  journal={CVGIP: Graphical Models and Image Processing},
  volume={53},
  number={3},
  pages={231--239},
  year={1991}
}

@article{penrose1955generalized,
  title={A generalized inverse for matrices},
  author={Penrose, Roger},
  journal={Proceedings of the Cambridge Philosophical Society},
  volume={51},
  number={3},
  pages={406--413},
  year={1955},
  publisher={Cambridge University Press}
}

@article{berman2025said,
  title={Who Said Neural Networks Aren't Linear?},
  author={Berman, Nimrod and Hallak, Assaf and Shocher, Assaf},
  journal={arXiv preprint arXiv:2510.08570},
  year={2025}
}

@article{wang2022zero,
  title={Zero-shot image restoration using denoising diffusion null-space model},
  author={Wang, Yinhuai and Yu, Jiwen and Zhang, Jian},
  journal={arXiv preprint arXiv:2212.00490},
  year={2022}
}
\bibliographystyle{icml2026}

\newpage
\appendix
\onecolumn
\vspace{-0.1cm}\section{Full quantitative results on attributes reconstruction}
\label{app:attribs}
\begin{table}[h]
\centering
\label{tab:attrib_recon}
\begin{tabular}{rlrr}
\toprule
\textbf{\#} & \textbf{Attribute} & \textbf{Prob Diff} & \textbf{Agreement (\%)} \\
\midrule
0 & Blurry & 0.0001 & 100 \\
1 & Eyeglasses & 0.0046 & 100 \\
2 & Wearing Necktie & 0.0065 & 100 \\
3 & Bald & 0.0027 & 99 \\
4 & Gray Hair & 0.0074 & 99 \\
5 & Sideburns & 0.0086 & 98 \\
6 & Pale Skin & 0.0144 & 98 \\
7 & Narrow Eyes & 0.0157 & 98 \\
8 & Double Chin & 0.0040 & 98 \\
9 & Mustache & 0.0097 & 97 \\
10 & Wearing Hat & 0.0158 & 97 \\
11 & Blond Hair & 0.0304 & 97 \\
12 & Chubby & 0.0106 & 96 \\
13 & Straight Hair & 0.0058 & 95 \\
14 & Oval Face & 0.0107 & 95 \\
15 & Goatee & 0.0252 & 95 \\
16 & 5 o Clock Shadow & 0.0185 & 94 \\
17 & Receding Hairline & 0.0129 & 94 \\
18 & Wearing Necklace & 0.0157 & 94 \\
19 & Heavy Makeup & 0.0443 & 93 \\
20 & Wearing Lipstick & 0.0463 & 93 \\
21 & Male & 0.0499 & 93 \\
22 & Bags Under Eyes & 0.0127 & 92 \\
23 & Black Hair & 0.0247 & 92 \\
24 & Brown Hair & 0.0292 & 92 \\
25 & Bangs & 0.0357 & 92 \\
26 & No Beard & 0.0501 & 91 \\
27 & Young & 0.0340 & 89 \\
28 & Wavy Hair & 0.0229 & 89 \\
29 & Smiling & 0.0572 & 88 \\
30 & Big Lips & 0.0084 & 87 \\
31 & Bushy Eyebrows & 0.0249 & 87 \\
32 & High Cheekbones & 0.0560 & 87 \\
33 & Pointy Nose & 0.0208 & 86 \\
34 & Big Nose & 0.0259 & 86 \\
35 & Rosy Cheeks & 0.0566 & 85 \\
36 & Attractive & 0.0292 & 83 \\
37 & Wearing Earrings & 0.0454 & 82 \\
38 & Arched Eyebrows & 0.0534 & 81 \\
39 & Mouth Slightly Open & 0.0989 & 80 \\
\bottomrule
\end{tabular}
\end{table}

\section{Implementation details}

\subsection{Back-Projection Guidance Scheduling.}

\textbf{Semantic reconstruction.}
For reconstruction tasks, BP guidance is applied throughout the entire reverse diffusion process, covering all timesteps $(t \in [1000, 0])$, in order to preserve the global semantic structure established early in generation.

\textbf{Single-attribute editing.}
For single-attribute manipulation, BP guidance is restricted to the later stages of the reverse process, with guidance enabled only for $(t \in [500, 0])$. At earlier timesteps $(t > 500)$, the BP step size is set to zero, allowing the diffusion prior to establish global structure before fine-grained attribute control is introduced.

\textbf{Multi-attribute editing.}
For multi-attribute editing, we extend the guidance window to $(t \in [800, 0])$ to account for the increased semantic coupling between attributes. As in the single-attribute case, BP guidance is disabled at earlier timesteps by setting the step size to zero.

\subsection{Covariance Adjustment.}
Naive modification of a single attribute (e.g., forcing "Beard" to 1) can create an adversarial target vector if correlated attributes (e.g., "Male") are not adjusted essentially. To maintain a plausible target $y_{target}$, we utilize the empirical covariance matrix $\Sigma$ of the training attributes. When modifying index $n$ by an amount $\Delta y_n$, we adjust all other indices $j \neq n$ according to their linear correlation:
\vspace{-0.1cm}\begin{equation}
    \Delta y_j = \frac{\Sigma_{jn}}{\Sigma_{nn}} \Delta y_n.
\vspace{-0.1cm}\end{equation}
This ensures that the target vector $y_{target}$ lies on the natural manifold of attribute correlations, preventing the generation of out-of-distribution artifacts.

In practice, we consider two complementary strategies for modifying attribute values: the covariance-based adjustment described above, which is used for highly correlated attributes, and a sparse single-attribute modification strategy (Eq.~25), which is applied to weakly correlated attributes such as \textit{Eyeglasses}.

\subsection{Architecture \& Hyper-parameters.}
\begin{table}[h!]
\centering
\caption{SPNN Architecture and Training Configuration}
\label{tab:spnn_config}
\begin{tabular}{ll}
\toprule
\textbf{Category} & \textbf{Setting} \\
\midrule
\multicolumn{2}{l}{\textbf{Model Architecture}} \\
Input resolution & $256 \times 256$ RGB \\
Number of SPNN blocks & 5 \\
Orthogonal mixing & Cayley $1 \times 1$ convolutions \\
Latent partitioning & Channel-wise \\
Activation (scale) & $\tanh(\cdot) \times 2.0$, then $\exp$ \\
Number of attributes & 40 (CelebA-HQ) \\
\midrule
\multicolumn{2}{l}{\textbf{Training Setup}} \\
Dataset & CelebA-HQ \\
Batch size & 256 \\
Optimizer & Adam \\
Learning rate & $2 \times 10^{-4}$ \\
Adam $(\beta_1, \beta_2)$ & $(0.9, 0.999)$ \\
Epochs & 15 \\
Gradient clipping & 1.0 \\
Warmup iterations & 200 \\
Precision & FP32 (float16=False) \\
Random seed & 556 \\
\midrule
\multicolumn{2}{l}{\textbf{Loss Weights}} \\
$\lambda_{\text{task}}$ & 1.0 \\
$\lambda_{\text{surj}}$ & 40.0 \\
$\lambda_{\text{stab}}$ & 40.0 \\
$\lambda_{\text{natural}}$ & 0.3 \\
$\lambda_{r\_\text{surj}}$ & 1.0 \\
$\lambda_{r\_\text{stab}}$ & 1.0 \\
$r$-optimization epochs & 50 \\
$r$-optimization learning rate & $1 \times 10^{-4}$ \\
$r$-optimization batch size & 256 \\
\bottomrule
\end{tabular}
\end{table}

\begin{table}[h!]
\centering
\caption{Diffusion Model and Sampling Configuration}
\label{tab:diffusion_config}
\begin{tabular}{ll}
\toprule
\textbf{Category} & \textbf{Setting} \\
\midrule
\multicolumn{2}{l}{\textbf{Data Configuration}} \\
Dataset & CelebA-HQ \\
Image resolution & $256 \times 256$ \\
Channels & 3 (RGB) \\
Logit transform & Disabled \\
Uniform dequantization & Disabled \\
Gaussian dequantization & Disabled \\
Random horizontal flip & Enabled \\
Rescaled input & Enabled \\
Out-of-distribution data & Disabled \\
Data loader workers & 32 \\
\midrule
\multicolumn{2}{l}{\textbf{Model Architecture}} \\
Model type & Simple U-Net \\
Input / output channels & 3 / 3 \\
Base channel width & 128 \\
Channel multipliers & [1, 1, 2, 2, 4, 4] \\
Residual blocks per level & 2 \\
Self-attention resolutions & 16 \\
Dropout & 0.0 \\
Variance type & FixedSmall \\
EMA & Enabled \\
EMA decay rate & 0.999 \\
Resampling with convolution & Enabled \\
\midrule
\multicolumn{2}{l}{\textbf{Diffusion Process}} \\
Noise schedule & Linear \\
$\beta_{\text{start}}$ & $1 \times 10^{-4}$ \\
$\beta_{\text{end}}$ & $2 \times 10^{-2}$ \\
Number of diffusion steps & 1000 \\
\midrule
\multicolumn{2}{l}{\textbf{Sampling \& Time-Travel Parameters}} \\
Sampling batch size & 1 \\
Sampling horizon ($T_{\text{sampling}}$) & 100 \\
Travel length & 1 \\
Travel repeat (reconstruction) & 1 \\
Travel repeat (attribute editing) & 2 \\
\bottomrule
\end{tabular}
\end{table}

\end{document}